\def\1{\bm{1}}
\def\rvu{{\mathbf{i}}}
\def\rvn{{\mathbf{n}}}
\def\rvt{{\mathbf{t}}}
\def\rvu{{\mathbf{u}}}
\def\rvv{{\mathbf{v}}}
\def\rvx{{\mathbf{x}}}
\def\rmA{{\mathbf{A}}}
\def\rmC{{\mathbf{C}}}
\def\rmD{{\mathbf{D}}}
\def\rmH{{\mathbf{H}}}
\def\rmK{{\mathbf{K}}}
\def\rmL{{\mathbf{L}}}
\def\rmR{{\mathbf{R}}}
\def\rmS{{\mathbf{S}}}
\def\rmU{{\mathbf{U}}}
\def\rmV{{\mathbf{V}}}
\def\rmW{{\mathbf{W}}}
\def\rmX{{\mathbf{X}}}
\DeclareMathAlphabet{\mathsfit}{\encodingdefault}{\sfdefault}{m}{sl}
\SetMathAlphabet{\mathsfit}{bold}{\encodingdefault}{\sfdefault}{bx}{n}
\def\sR{{\mathbb{R}}}
\newcommand{\R}{\mathbb{R}}
\DeclareMathOperator*{\argmin}{arg\,min}
\newcommand{\intint}[1]{\left \llbracket#1\right \rrbracket}
\newcolumntype{P}[1]{>{\centering\arraybackslash}p{#1}}
\newcolumntype{M}[1]{>{\centering\arraybackslash}m{#1}}
\newcommand{\addVE}[1]{{\color{blue}  #1}}
\def\datadim{D}
\def\nexamples{N}
\def\nclusters{K}
\def\nfactors{Q}
\newcommand{\bigO}[1]{\mathcal{O}\left (#1\right )}
\def\qkmeans{\texttt{QK-means}\xspace}
\def\kmeans{\texttt{K-means}\xspace}
\def\palm{\texttt{palm4MSA}\xspace}
\newcommand\norm[1]{\left \| #1\right \|}
\def\eqdef{:=}
\newtheorem*{remark}{Remark}
\newtheorem*{proposition}{Proposition}
\title{QuicK-means: Acceleration of K-means by learning a fast transform}
\author[1]{Luc Giffon}
\author[1]{Valentin Emiya}
\author[2,1]{Liva Ralaivola}
\author[1]{Hachem Kadri}
\affil[1]{Aix Marseille Univ, CNRS, LIS, Marseille, France}
\affil[2]{Criteo}
\begin{document}

\maketitle

\begin{abstract}
	
	\kmeans -- and the celebrated Lloyd algorithm -- is more than the clustering method it was originally designed to be. 
	It has indeed proven pivotal to help increase the speed of many machine learning and data analysis techniques such as indexing, nearest-neighbor search and prediction, data compression; its beneficial use has been shown to carry over to the acceleration of kernel machines (when using the Nyström method). 
	Here, we propose a fast extension of \kmeans, dubbed \texttt{QuicK-means}, that rests on the idea of expressing the matrix of the $\nclusters$ centroids as a product of sparse matrices, a feat made possible by recent results devoted to find approximations of matrices as a product of sparse factors. Using such a decomposition squashes the complexity of the matrix-vector product between the factorized $\nclusters \times \datadim$ centroid matrix $\mathbf{U}$ and any vector from $\mathcal{O}(\nclusters \datadim)$ to $\mathcal{O}(A \log A+B)$, with $A=\min (\nclusters, \datadim)$ and $B=\max (\nclusters, \datadim)$, where $\datadim$ is the dimension of the training data. This drastic computational saving has a direct impact in the assignment process of a point to a cluster, meaning that it is not only tangible at prediction time, but also at training time, provided the factorization procedure is performed during Lloyd's algorithm. We precisely show that resorting to a factorization step at each iteration does not impair the convergence of the optimization scheme and that, depending on the context, it may entail a reduction of the training time. Finally, we provide discussions and numerical simulations that show the versatility of our computationally-efficient  \texttt{QuicK-means} algorithm. 
\end{abstract}

% OUR PAPER
%====================================================================================
\section{Introduction}

\kmeans is one of the most popular clustering algorithms~\cite{hartigan1979algorithm,jain2010data}. It can be used beyond clustering, for other tasks such as indexing, data compression,  nearest-neighbor search and prediction, and local network community detection~\cite{muja2014scalable,van2016local}. \kmeans is also a pivotal process to help increase the speed and the accuracy of many machine learning techniques such as the Nyström approximation of kernel machines~\cite{si2016computationally} and RBF networks~\cite{que2016back}.
The  conventional  \kmeans  algorithm  has  a  complexity  of~$\bigO{\nexamples \nclusters \datadim}$ per iteration, where $\nexamples$ is the number of data points, $\nclusters$ the number of clusters and $\datadim$ is the dimension of the data points.
However, the larger the number of clusters, the more iterations are needed to converge~\cite{arthur2006slow}.
As data dimensionality and data sample size continue to grow, it is critical to produce viable and cost-effective alternatives to the computationally expensive conventional \kmeans. 
Previous attempts to alleviate the computational issues in \kmeans often relied on batch-, sparsity- and randomization-based methods~\cite{Sculley2010Web, boutsidis2014randomized,shen2017compressed,liu2017sparse}.

Fast transforms have recently received increased attention in machine learning community as they can be used  to speed up random projections~\cite{le2013fastfood,gittens2016revisiting} and to improve landmark-based approximations~\cite{si2016computationally}.
These works primarily focused on fast transforms such as Fourier and Hadamard transforms, which are fixed before the learning begins. An interesting question is whether one can go beyond that and learn the fast transform from data. 
In a recent paper~\cite{LeMagoarou2016Flexible}, the authors introduced a sparse matrix approximation scheme aimed  at  reducing the  complexity  of  applying  linear  operators  in  high  dimension by   approximately   factorizing   the   corresponding   matrix   into few   sparse   factors. One interesting observation is that fast transforms, such as the  Hadamard  transform  and  the  Discrete  Cosine  transform, can be exactly or approximately decomposed as a product of sparse matrices.
In this paper, we take this idea further and investigate attractive and computationally less costly implementations of the \kmeans algorithm by learning a fast transform from data.
Specifically, we make the following contributions:
\begin{itemize}
	\item we introduce \texttt{QuicK-means}, a fast extension of \kmeans that rests on the idea of expressing the matrix of the $K$ centroids as a product of sparse matrices, a feat made possible by recent results devoted to find approximations of matrices as a product of sparse factors,
	\item we show that each update step in one iteration of our algorithm  reduces the overall objective, which is enough to guarantee the convergence of \texttt{QuicK-means},
	\item we perform a complexity analysis of our algorithm, showing that the computational gain in \texttt{QuicK-means}  has a direct impact in the assignment process of a point to a cluster, meaning that it is not only tangible at prediction time, but also at training time,
	\item we provide an empirical evaluation of \texttt{QuicK-means}  performance which demonstrates its effectiveness on different datasets in the contexts of clustering and kernel Nystr\"om approximation.
\end{itemize}

%!TEX root=neurips2019_qmeans.tex
\section{Preliminaries}
\label{sec:background}
We briefly review the basics of \kmeans and give background on learning fast transforms.
To  assist  the  reading,  we  list  the notations used in the paper in Table~\ref{tab:notation}.

%!TEX root=neurips2019_qmeans.tex
 
%\paragraph{Notations}

%%%%%%%%%%%%%%%%%%%%%%%%%%%%%%%%%%%%%%%%%%%%%%%%%%%%%%%%%%%%
\begin{table}[t]
	\centering
	\begin{footnotesize}
	\begin{tabular}{cl}\\
\hline
		{\bf Symbol}  & {\bf Meaning}\\
\hline
$\intint{M}$  & set of integers from $1$ to $M$\\
$\|\cdot\|$ & $L_2$-norm\\
$\|\cdot\|_F$ &    Frobenius norm  \\
$\|\cdot\|_0$ & $L_0$-norm\\
$\|\cdot\|_2$    &    spectral norm  \\
$\rmD_\rvv$ & diagonal matrix with vector $\rvv$ on the diagonal\\                                                          
\hline
$N$           & number of data points\\
$D$           & data dimension\\
$K$           & number of clusters\\
$Q$           & number of sparse factors\\
$\rvx_1,\ldots, \rvx_N $        &    data points\\
$\rmX \in\mathbb{R}^{N\times D}$&    data matrix\\
$\rvt$        &  cluster assignment vector\\
$\rvu_1,\ldots, \rvu_K $        &    \kmeans centroids\\
$\rmU\in\mathbb{R}^{K\times D}$ &    \kmeans centroid matrix\\
$\rvv_1,\ldots, \rvv_K $        &    \qkmeans centroids\\
$\rmV\in\mathbb{R}^{K\times D}$ &    \qkmeans centroid matrix\\
$\rmS_1, \ldots, \rmS_Q$        &    sparse matrices\\
$\mathcal{E}_1, \ldots, \mathcal{E}_Q$ & sparsity constraint sets\\
$\delta_{\mathcal{E}}$ & 		indicator functions for set $\mathcal{E}$\\
$\tau$  & current iteration \\
\hline
	\end{tabular}
	\end{footnotesize}
	\caption{Notation used in this paper.}
	\label{tab:notation}
\end{table}

\subsection{\kmeans}
\label{sec:kmeans}
The \kmeans algorithm is used to partition a set $\rmX=\{\rvx_1,\ldots,\rvx_N\}$ of $N$  vectors $\rvx_n\in\R^{\datadim}$ into a predefined number $\nclusters$ of clusters
with the aim of minimizing the distance between each $\rvx_n$ to the center $\rvu_k\in\R^{D}$
of the cluster $k$ it belongs to ---the center $\rvu_k$ of cluster $k$ is the
 mean vector of the points assigned to cluster $k$.
\kmeans attempts to solve
\begin{equation}
\label{eq:kmean_problem}
    \argmin_{\rmU, \rvt} \sum_{k\in\intint{\nclusters}} \sum_{n: t_n = k} \|\rvx_{n} -\rvu_{k}\|^2,
\end{equation}
where $\rmU=\{\rvu_1,\ldots,\rvu_K\}$ is the set of cluster centers and $\rvt \in  \intint{\nclusters}^{\nexamples}$ is the assignment vector that puts $\rvx_n$ in cluster $k$
if $t_n=k$.

\paragraph{Lloyd's algorithm.} The most popular procedure to (approximately) 
solve the \kmeans problem is the iterative Lloyds algorithm, which alternates
i) an assignment step that decides the current cluster to which each point $\rvx_n$
belongs and ii) a reestimation step which refines the clusters and their centers.
In little more detail, the algorithm starts with an initialized set of $\nclusters$
 cluster centers $\rmU^{(0)}$ and proceeds as follows: at iteration $\tau$,
  the assignments are updated as
\begin{align}
\label{eq:assignment_problem_kmeans}
\forall n\in\intint{N}, t_n^{(\tau)} \leftarrow \argmin_{k \in \intint{\nclusters]}} \left\|\rvx_{n} - \rvu_{k}^{(\tau-1)}\right\|_2^2 = \argmin_{k \in \intint{\nclusters}} \left\|\rvu_{k}^{(\tau-1)}\right\|_2^2 - 2 \left\langle\rvu_{k}^{(\tau-1)}, \rvx_{n}\right\rangle,
\end{align}
 the reestimation of the cluster centers is performed as
\begin{align}
\label{eq:center_update}
\forall k\in\intint{K}, \rvu^{(\tau)}_k \leftarrow \hat{\rvx}_k(\rvt^{(\tau)}) \eqdef \frac{1}{n_k^{(\tau)}} \sum_{n: t^{(\tau)}_n= k} {\rvx_{n}}
\end{align}
where $n_k^{(\tau)}\eqdef |\{n: t^{(\tau)}_n=k\}|$ is the number of points in cluster $k$
at time $\tau$ and $\hat{\rvx}_k(\rvt)$ is the mean vector of the elements of cluster $k$ according to assignment $\rvt$. 

\paragraph{Complexity of Lloyd's algorithm.} The assignment step \eqref{eq:assignment_problem_kmeans} costs $\mathcal{O}(\nexamples\datadim\nclusters)$ operations while the update of the centers~\eqref{eq:center_update} costs $\mathcal{O}\left (\nexamples\datadim\right )$ operations. Hence, the bottleneck of the overall time complexity $\mathcal{O}(\nexamples\datadim\nclusters)$ stems from the assignment step. Once the clusters have been defined, assigning $\nexamples'$ new points to these clusters is performed via \eqref{eq:assignment_problem_kmeans} at the cost of $\mathcal{O}\left (\nexamples'\datadim\nclusters \right )$ operations.

The main contribution in this paper relies on the idea that \eqref{eq:assignment_problem_kmeans} may be computed more efficiently by approximating $\rmU$ as a fast operator.

\subsection{Learning Fast Transforms as the Product of Sparse Matrices}
\label{sec:palm4msa}

\paragraph{Structured linear operators as products of sparse matrices.}
The popularity of some linear operators from $\R^{M}$ to $\R^{M}$ (with $M<\infty$)
 like Fourier or Hadamard transforms comes from both their mathematical 
 properties and their ability to compute the mapping of some input $\rvx\in\R^M$ with efficiency, typically in $\mathcal{O}\left (M\log\left (M\right )\right )$ rather than 
 in $\mathcal{O}\left (M^2\right)$ operations .
The main idea of the related fast algorithms is that the matrix $\rmU\in\sR^{M\times M}$ characterizing such linear operators can be written as the product $\rmU=\Pi_{q\in\intint{\nfactors}}\rmS_q$ of $\nfactors$ sparse matrices $\rmS_q$, with $Q=\mathcal{O}\left (\log M\right )$ factors and $\left \|\rmS_q\right \|_0=\mathcal{O}\left (M\right )$ non-zero coefficients per factor \cite{LeMagoarou2016Flexible,Morgenstern1975Linear}:
for any vector $\rvx\in\sR^M$, $\rmU\rvx$ can thus be computed as $\mathcal{O}\left (\log M\right )$ products $\rmS_0 \left (\rmS_1 \left (\ldots \left (\rmS_{Q-1}\rvx\right )\right )\right )$ between a sparse matrix and a vector, the cost of each product being $\mathcal{O}\left (M\right )$. This gives a $\mathcal{O}(M \log M)$ time complexity for computing $\rmU\rvx$ in that case.

\paragraph{Learning a computationally-efficient decomposition approximating an arbitrary operator.} When the linear operator $\rmU$ is an arbitrary matrix, one may approximate it with such a sparse-product structure by learning the factors $\left \lbrace\rmS_q\right \rbrace_{q\in\intint{Q}}$ in order to benefit from a fast algorithm.
A recent contribution~\cite{LeMagoarou2016Flexible} has proposed algorithmic strategies to learn such a factorization. Based on the proximal alternating linearized minimization (\texttt{PALM}) algorithm~\cite{bolte2014proximal}, the \texttt{PALM} for Multi-layer Sparse Approximation (\palm) algorithm~\cite{LeMagoarou2016Flexible} aims at approximating a matrix $\rmU\in\sR^{\nclusters\times\datadim}$ as a product of sparse matrices by solving
\begin{align}
\label{eq:palm4msa}
\min_{\left \lbrace\rmS_q\right \rbrace_{q\in\intint{Q}}} \left \|\rmU -  \prod_{q\in\intint{\nfactors}}{\rmS_q}\right \|_F^2 + \sum_{q\in\intint{\nfactors}} \delta_{\mathcal{E}_q}(\rmS_q)
\end{align}
where, for each $q\in\intint{Q}$, $\delta_{\mathcal{E}_q}(\rmS_q)=0$ 
if $\rmS_q \in \mathcal{E}_q$ and $\delta_{\mathcal{E}_q}(\rmS_q)=+\infty$ otherwise, $\mathcal{E}_q$ being a constraint set that typically impose a sparsity structure on its elements, as well as a scaling constraint. The \palm algorithm and more related details are given in Appendix~\ref{sec:app:palm4msa}.

Although this problem is non-convex and the computation of a global optimum cannot be
ascertained, the \palm algorithm is able to find good local minima with convergence guarantees.

%!TEX root=neurips2019_qmeans.tex

\section{QuicK-means}
\label{sec:contribution}

We here introduce our main contribution, \texttt{QuicK-means} (abbreviated by \qkmeans), 
show its convergence property and analyze its computational complexity.

\subsection{\qkmeans: Encoding Centroids as Products of Sparse Matrices}

\texttt{QuicK-means} is a variant of the \kmeans algorithm in which the matrix of centroids $\rmU$
is approximated as a product $\rmV=\prod_{\in\intint{\nfactors}}\rmS_q$ of sparse matrices $\rmS_q$.
Doing so will allow us to cope with the computational bulk imposed by the product $\rmU\rvx$
(cf.~\eqref{eq:assignment_problem_kmeans}) at the core of the cluster assignment process.

Building upon the \kmeans optimization problem~\eqref{eq:kmean_problem} and fast-operator approximation problem~\eqref{eq:palm4msa} the \qkmeans optimization problem 
writes:
\begin{align}
\label{eq:qmean_problem}
 \argmin_{\rmS_1, \ldots, \rmS_{\nfactors}, \rvt} g\left(\rmS_1, \ldots, \rmS_{\nfactors}, \rvt\right)
    \eqdef \sum_{k\in\intint{\nclusters}}\sum_{n: t_n = k} \left\|\rvx_n -\rvv_k\right\|^2 + \sum_{q\in\intint{\nfactors}} \delta_{\mathcal{E}_q}(\rmS_q) \text{ s. t. } \rmV = \prod_{q\in\intint{\nfactors}}\rmS_q
\end{align}
This is a regularized version of the \kmeans optimization problem~\eqref{eq:kmean_problem} in which centroids $\rvv_k$ are constrained to form a matrix $\rmV$ with a fast-operator structure, the indicator functions $\delta_{\mathcal{E}_q}$ imposing the sparsity of matrices $\rmS_q$.
More details on the modeling choices are given in the experimental part in section~\ref{sec:uses:settings}.

This problem can be solved using Algorithm~\ref{algo:qmeans},
which proceeds in a similar way as Lloyd's algorithm by alternating an assignment step at line \ref{line:qmeans:assignment} and an update of the centroids at lines~\ref{line:qmeans:compute_means}--\ref{line:qmeans:U}. The assignment step can be computed efficiently thanks to the fast-structure in matrix $\rmV$. The update of the centroids relies on learning a fast-structure operator $\rmV$ that approximate of the true centroid matrix $\rmU$ weighted by the number of examples $n_k$ assigned to each cluster $k$.

\begin{algorithm}[t]
	\caption{\qkmeans algorithm and its time complexity.}
	\label{algo:qmeans}
	\begin{algorithmic}[1]
\REQUIRE $\rmX \in \R^{\nexamples \times \datadim}$, $\nclusters$, initialization $\left \lbrace \rmS_q^{(0)} : \rmS_q^{(0)} \in \mathcal{E}_q\right \rbrace_{q\in\intint{\nfactors}}$
\COMMENT{$A \eqdef \min\left (\nclusters, \datadim\right )$}
%\STATE $\rmV^{(0)} \eqdef \prod_{q\in\intint{\nfactors}}{\rmS_q^{(0)}}$
\STATE Set $\rmV^{(0)} : \rvx \mapsto \prod_{q\in\intint{\nfactors}}{\rmS_q^{(0)}} \rvx$
\COMMENT{$B \eqdef \max\left (\nclusters, \datadim\right )$}
\FOR{$\tau=1, 2, \ldots$ until convergence}
	\STATE $\rvt^{(\tau)} \eqdef \argmin_{\rvt \in \intint{\nclusters}^\nexamples} \sum_{n\in\intint{\nexamples}} {\left \|\rvx_n - \rvv^{(\tau -1)}_{t_n}\right \|^2}$
	\COMMENT{$\mathcal{O}\left (\nexamples\left(A\log A+B\right ) + AB\right )$}
	\label{line:qmeans:assignment}
	\STATE $\forall k\in\intint{\nclusters}, \rvu_k \eqdef \frac{1}{n_k} \sum_{n: t_n^{(\tau)}= k} {\rvx_n}$
with $n_k \eqdef |\{n: t_n^{(\tau)}=k\}|$
	\COMMENT{$\bigO{\nexamples\datadim}$}
	\label{line:qmeans:compute_means}
	\STATE $\rmA \eqdef \rmD_{\sqrt{\rvn}} \times \rmU $
	\COMMENT{$\bigO{\nclusters\datadim}$}
	\label{line:qmeans:A}
	\STATE $\mathcal{E}_0 \eqdef \left \lbrace \rmD_{\sqrt{\rvn}} \right \rbrace$
	\label{line:qmeans:E0}
	\STATE $\left \lbrace \rmS_q^{(\tau)}\right \rbrace_{q=0}^\nfactors \eqdef \argmin_{\left \lbrace \rmS_q\right \rbrace_{q=0}^\nfactors} \left \|\rmA - \prod_{q=0}^\nfactors\rmS_q\right \|_F^2 + \sum_{q=0}^\nfactors \delta_{\mathcal{E}_q}(\rmS_q)$\\
	\COMMENT{$\bigO{AB\left (\log^2 A+\log B\right )}$ (or $\bigO{AB\left (\log^3A+\log A \log B\right )}$)}
	\label{line:qmeans:S}
	\STATE Set $\rmV^{(\tau)} : \rvx \mapsto \prod_{q\in\intint{\nfactors}}{\rmS_q^{(\tau)}} \rvx$
	\COMMENT{$\bigO{1}$}
	\label{line:qmeans:U}
	\ENDFOR
	\ENSURE assignement vector $\rvt$ and sparse matrices $\left \lbrace \rmS_q : \rmS_q \in \mathcal{E}_q\right \rbrace_{q\in\intint{\nfactors}}$ such that $\prod_{q\in\intint{\nfactors}}\rmS_q \approx \rmU$ the $\nclusters$ means of the $\nexamples$ data points
\end{algorithmic}
\end{algorithm}

\iffalse
\begin{remark}[Assignment/Re-estimation trade-off.]
A strategy to tackle this problem would be to first run the vanilla K-means algorithm,
 obtain the matrix of centroids $U$ and then encode $U$ as a product of sparse matrices
 using Hierarchical Palm4MSA. This would however prevent us from taking advantage of 
 the expected low complexity product that plays a role in the assignement step of 
 the procedure.
\end{remark}

\todo[inline]{At some point, talk about the trade-off that we are playing with
regarding the cost of the assignment and the cost of the re-estimation procedure.}
\fi

\subsection{Convergence of \qkmeans}
Similarly to \kmeans, \qkmeans converges locally as stated in the following proposition.

\begin{proposition}[Convergence of \qkmeans]
\label{thm:convergence}
The iterates $\left \lbrace\rmS^{(\tau)} \right \rbrace_{q\in\intint{\nfactors}}$ and $\rvt^{(\tau)}$ in Algorithm~\ref{algo:qmeans} are such that the values
\begin{align}
\label{eq:qmean_problem_2}
    g(\rmS_1^{(\tau)}, \ldots,\rmS_\nfactors^{(\tau)}, \rvt^{(\tau)})
    = \sum_{k\in\intint{\nclusters}} \sum_{n: \rvt^{(\tau)}_n = k} \norm{\rvx_n - \rvv^{(\tau)}_k}^2 + \sum_{q\in\intint{\nfactors}} \delta_{\mathcal{E}_q}\left (\rmS_q^{(\tau)}\right )
    \text{ s.t. } \rmV = \prod_{q\in\intint{\nfactors}}{\rmS_q^{(\tau)}}
\end{align}
of the objective function are non-increasing.
\end{proposition}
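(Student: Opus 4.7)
The plan is to split iteration $\tau$ into two sub-steps and show that each separately does not increase $g$: the assignment update (line~\ref{line:qmeans:assignment}) and the centroid-factorization update (lines~\ref{line:qmeans:compute_means}--\ref{line:qmeans:U}). Chaining the two inequalities will yield $g(\rmS_1^{(\tau)},\ldots,\rmS_\nfactors^{(\tau)},\rvt^{(\tau)})\le g(\rmS_1^{(\tau-1)},\ldots,\rmS_\nfactors^{(\tau-1)},\rvt^{(\tau-1)})$.

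For the assignment step, the factors $\rmS_q^{(\tau-1)}$ are unchanged, hence $\rmV^{(\tau-1)}$ and the indicator terms are unchanged. The fit term $\sum_{n}\|\rvx_n-\rvv_{t_n}^{(\tau-1)}\|^2$ is separable in $n$, and line~\ref{line:qmeans:assignment} minimises each summand independently over $t_n\in\intint{\nclusters}$, so the fit term can only decrease. This gives $g(\rmS^{(\tau-1)},\rvt^{(\tau)})\le g(\rmS^{(\tau-1)},\rvt^{(\tau-1)})$.

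For the factorisation step, I would use the Koenig--Huygens identity applied cluster by cluster: with $n_k\eqdef n_k^{(\tau)}$, $\rvu_k\eqdef \hat{\rvx}_k(\rvt^{(\tau)})$ and any centroid matrix $\rmV$ with rows $\rvv_k$,
$$\sum_{k\in\intint{\nclusters}}\sum_{n:t_n^{(\tau)}=k}\|\rvx_n-\rvv_k\|^2 = \sum_{k\in\intint{\nclusters}}\sum_{n:t_n^{(\tau)}=k}\|\rvx_n-\rvu_k\|^2 + \sum_{k\in\intint{\nclusters}} n_k\|\rvu_k-\rvv_k\|^2.$$
The first right-hand term does not depend on $\rmV$. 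The second equals $\|\rmD_{\sqrt{\rvn}}(\rmU-\rmV)\|_F^2$, which, using $\rmA=\rmD_{\sqrt{\rvn}}\rmU$ (line~\ref{line:qmeans:A}), the constraint $\rmS_0=\rmD_{\sqrt{\rvn}}$ imposed by $\mathcal{E}_0$ (line~\ref{line:qmeans:E0}), and $\rmV=\prod_{q\in\intint{\nfactors}}\rmS_q$, rewrites as $\|\rmA-\prod_{q=0}^{\nfactors}\rmS_q\|_F^2$. Consequently, up to an additive constant independent of $(\rmS_q)_q$, the map $(\rmS_q)_q\mapsto g(\rmS_1,\ldots,\rmS_\nfactors,\rvt^{(\tau)})$ coincides on the feasible set with the \palm objective at line~\ref{line:qmeans:S}.

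It therefore suffices to show that the factor update does not increase the \palm objective. Initialising \palm at $\rmS_0=\rmD_{\sqrt{\rvn}}\in\mathcal{E}_0$ and $\rmS_q=\rmS_q^{(\tau-1)}\in\mathcal{E}_q$ for $q\in\intint{\nfactors}$, the monotonic-descent guarantee of the underlying PALM scheme~\cite{bolte2014proximal,LeMagoarou2016Flexible} ensures that its output $(\rmS_q^{(\tau)})_{q=0}^{\nfactors}$ satisfies $\|\rmA-\prod_{q=0}^{\nfactors}\rmS_q^{(\tau)}\|_F^2\le \|\rmA-\rmD_{\sqrt{\rvn}}\rmV^{(\tau-1)}\|_F^2$, while all indicator terms remain zero. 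Translating back through the Koenig--Huygens identity yields $g(\rmS^{(\tau)},\rvt^{(\tau)})\le g(\rmS^{(\tau-1)},\rvt^{(\tau)})$, and combining with the assignment inequality completes the proof. The main subtlety is the factorisation step: one must be careful that $\mathcal{E}_0$ changes from iteration to iteration (since $\rvn$ does), so the \palm warm-start has to be chosen to lie in the \emph{current} $\mathcal{E}_0$ in order to invoke the descent guarantee; the previous factors $\rmS_q^{(\tau-1)}$ for $q\ge 1$ remain in their (unchanging) constraint sets and can be reused as-is.
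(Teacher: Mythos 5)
Your proposal is correct and follows essentially the same route as the paper's proof: the same separability argument for the assignment step, the same Koenig--Huygens (bias--variance) rewriting that turns the centroid subproblem into the weighted factorization objective $\norm{\rmA - \prod_{q=0}^{\nfactors}\rmS_q}_F^2$ with $\mathcal{E}_0=\lbrace\rmD_{\sqrt{\rvn}}\rbrace$, and the same warm-start-plus-descent argument for \palm, chained to give monotonicity. Your closing remark about re-initialising $\rmS_0$ in the current $\mathcal{E}_0$ (since $\rvn$ changes across iterations) is a sensible precision that the paper leaves implicit.
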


\begin{proof}
To proove this convergence, we show that each of the assignment and centroid update steps in one iteration $\tau$ of the algorithm actually reduces the overall objective.

\paragraph{Assignment step (Line \ref{line:qmeans:assignment})} For a fixed $\rmV^{(\tau-1)}$, the optimization problem at Line \ref{line:qmeans:assignment} is separable for each example indexed by $n \in \intint{\nexamples}$ and the new indicator vector $\rvt^{(\tau)}$ is thus defined as:
\begin{align}
\label{eq:qmean_problem_U_fixed}
 t^{(\tau)}_n = \argmin_{k \in \intint{\nclusters}} \norm{\rvx_n - \rvv_k^{(\tau-1)}}_2^2.
\end{align}
This step minimizes the first term in~\eqref{eq:qmean_problem_2} w.r.t. $\rvt$ while the second term is constant so we have 
\begin{align*}
g(\rmS_1^{(\tau-1)}, \ldots,\rmS_\nfactors^{(\tau-1)}, \rvt^{(\tau)}) \leq g(\rmS_1^{(\tau-1)}, \ldots,\rmS_\nfactors^{(\tau-1)}, \rvt^{(\tau-1)}).
\end{align*}

\paragraph{Centroids update step (Lines \ref{line:qmeans:compute_means}--\ref{line:qmeans:U}).} We know consider a fixed assignment vector $\rvt$. We first note that for any cluster $k$ with true centroid $\rvu_k$ and approximated centroid $\rvv_k$, we have
\begin{align*}
	\sum_{n: t_n = k} \norm{\rvx_n -\rvv_k}^2
	 & =\sum_{n: t_n = k} \norm{\rvx_n -\rvu_k+\rvu_k-\rvv_k}^2\\
	& = \sum_{n: t_n = k}\left(\norm{\rvx_n-\rvu_k}^2+\norm{\rvu_k-\rvv_k}^2-2\langle\rvx_n-\rvu_k,\rvu_k-\rvv_k\rangle\right)\notag\\
	& = \sum_{n: t_n= k} \norm{\rvx_n-\rvu_k}^2+n_k\norm{\rvu_k-\rvv_k}^2 - 2 \left\langle\underbrace{\sum_{n: t_n = k}\left (\rvx_n-\rvu_k\right )}_{=0},\rvu_k-\rvv_k\right\rangle\notag\\
	&= \sum_{n: t_n = k} \norm{\rvx_n-\rvu_k}^2 + \norm{\sqrt{n_k}\left (\rvu_k-\rvv_k\right )}^2
\end{align*}

For a fixed $\rvt$, the new sparsely-factorized centroids are solutions of the following subproblem:
\begin{align}
 \argmin_{\rmS_1, \ldots,\rmS_Q} g(\rmS_1, \ldots,\rmS_Q, \rvt) 
 & = \argmin_{\rmS_1, \ldots,\rmS_Q} \sum_{k\in\intint{\nclusters}}  \sum_{n: t_n = k} \norm{\rvx_n - \rvv_k}^2_2 + \sum_{q\in\intint{\nfactors}} \delta_q(\rmS_q) 
 \text{ s. t. } \rmV = \prod_{q\in\intint{\nfactors}}{\rmS_q} \nonumber \\
 & = \argmin_{\rmS_1, \ldots,\rmS_Q} \norm{\rmD_{\sqrt{\rvn}} (\rmU - \rmV)}_F^2
 + \sum_{k\in\intint{\nclusters}} c_k + \sum_{q\in\intint{\nfactors}} \delta_q(\rmS_q)
 \text{ s. t. } \rmV = \prod_{q\in\intint{\nfactors}}{\rmS_q} \nonumber\\
  & = \argmin_{\rmS_1, \ldots,\rmS_Q} \norm{\rmA - \rmD_{\sqrt{\rvn}} \prod_{q\in\intint{\nfactors}}{\rmS_q}}_F^2
 + \sum_{q\in\intint{\nfactors}} \delta_q(\rmS_q)
 \label{eq:qmeans_problem_t_fixed}
\end{align}
where :
\begin{itemize}
 \item $\sqrt{\rvn} \in \R^{\nclusters}$ is the pair-wise square root of the vector indicating the number of observations $n_k \eqdef \left | \left \lbrace n: t_n = k\right \rbrace \right |$  in each cluster $k$;
 \item $\rmD_{\sqrt{\rvn}} \in \R^{K \times K}$ refers to a diagonal matrix with vector $\sqrt{\rvn}$ on the diagonal;
 \item $\rmU\in \R^{K \times d}$ refers to the unconstrained centroid matrix obtained from the data matrix $\rmX$ and the indicator vector $\rvt$: $\rvu_k \eqdef \frac{1}{n_k}\sum_{n:t_n = k} {\rvx_n}$ (see Line~\ref{line:qmeans:compute_means});
 \item $\rmD_{\sqrt{\rvn}} (\rmU - \rmV)$ is the matrix with $\sqrt{n_k}\left (\rvu_k-\rvv_k\right )$ as $k$-th row;
 \item $c_k \eqdef \sum_{n: t_n = k}\norm{\rvx_n - \rvu_k}$ is constant w.r.t. $ \rmS_1, \ldots,\rmS_Q$;
 \item $\rmA \eqdef \rmD_{\sqrt{\rvn}} \rmU$ is the unconstrained centroid matrix reweighted by the size of each cluster (see Line~\ref{line:qmeans:A}).
\end{itemize}

A local minimum of~\eqref{eq:qmeans_problem_t_fixed} is obtained by applying the \palm algorithm or its hierarchical variant to approximate $\rmA$, as in Line~\ref{line:qmeans:S}. The first factor is forced to equal $\rmD_{\sqrt{\rvn}}$ by setting $\mathcal{E}_0$ to a singleton at Line~\ref{line:qmeans:E0}. Using the previous estimate $\left \lbrace \rmS_q^{(\tau-1)}\right \rbrace_{q\in\intint{\nfactors}}$ to initialize this local minimization, we thus obtain that $g(\rmS_1^{(\tau)}, \ldots,\rmS_\nfactors^{(\tau)}, \rvt^{(\tau)}) \leq g(\rmS_1^{(\tau-1)}, \ldots,\rmS_\nfactors^{(\tau-1)}, \rvt^{(\tau)})$.

We finally have, for any $\tau$,
\begin{align*}
%g\left (\left \lbrace \rmS_q^{(\tau)}\right \rbrace_{q\in\intint{\nfactors}}, \rvt^{(\tau)}\right ) & \leq
%g\left (\left \lbrace \rmS_q^{(\tau-1)}\right \rbrace_{q\in\intint{\nfactors}}, \rvt^{(\tau)}\right ) \leq
%g\left (\left \lbrace \rmS_q^{(\tau-1)}\right \rbrace_{q\in\intint{\nfactors}}, \rvt^{(\tau-1)}\right ) \\
%& \leq
%\ldots \leq
%g\left (\left \lbrace \rmS_q^{(0)}\right \rbrace_{q\in\intint{\nfactors}}, \rvt^{(0)}\right )
%\\
g\left (\rmS_1^{(\tau)}, \ldots,\rmS_\nfactors^{(\tau)}, \rvt^{(\tau)}\right ) 
& \leq
g\left (\rmS_1^{(\tau-1)}, \ldots,\rmS_\nfactors^{(\tau-1)}, \rvt^{(\tau)}\right )
\leq 
g\left (\rmS_1^{(\tau-1)}, \ldots,\rmS_\nfactors^{(\tau-1)}, \rvt^{(\tau-1)}\right ) \\
& \leq \ldots \leq
g\left (\rmS_1^{(0)}, \ldots,\rmS_\nfactors^{(0)}, \rvt^{(0)}\right )
\end{align*}
\end{proof}

\subsection{Complexity analysis}

Since the space complexity of the proposed \qkmeans algorithm is comparable to that of \kmeans, we only detail its time complexity. We set $A=\min\left (\nclusters, \datadim\right )$ and $B=\max\left (\nclusters, \datadim\right )$, and assume that the number of factors satisfies $\nfactors=\bigO{\log A}$.

The analysis is proposed under the following assumptions: the product between two dense matrices of shapes ${N_1\times N_2}$ and ${N_2\times N_3}$ can be done $\mathcal{O}\left (N_1 N_2 N_3 \right )$ operations; 
the product between a sparse matrix with $\bigO{S}$ non-zero entries and a dense vector can be done in $\bigO{S}$ operations; 
the product between two sparse matrices of shapes ${N_1\times N_2}$ and ${N_2\times N_3}$, both having $\bigO{S}$ non-zero values can be done in $\bigO{S \min\left (N_1, N_3\right )}$ and the number of non-zero entries in the resulting matrix is $\bigO{S^2}$.

\paragraph{Complexity of the \kmeans algorithm.}
We recall here that the \kmeans algorithm complexity is dominated by its cluster assignation step which requires $\bigO{\nexamples\nclusters\datadim}=\bigO{\nexamples A B}$ operations (see Eq.~\eqref{eq:assignment_problem_kmeans}).

\paragraph{Complexity of algorithm \palm.} The procedure consists in an alternate optimization of each sparse factor. 
At each iteration, the whole set of $\nfactors$ factors is updated with at a cost in $\bigO{AB\left (\log^2 A+\log B\right )}$, as detailed in Appendix~\ref{sec:app:palm4msa}. 
The bottleneck is the computation of the gradient, which benefits from fast computations with sparse matrices.
The hierarchical version of \palm proposed in~\cite{LeMagoarou2016Flexible} consists in running $\palm$ $2Q$ times so that its time complexity is in $\bigO{AB\left (\log^3 A + \log A \log B\right )}$.

\paragraph{Complexity of the \qkmeans algorithm.} The overall complexity of \qkmeans is in $\bigO{\nexamples\left(A\log A+B\right ) + AB \log^2 A}$ when used with \palm and in $\bigO{\nexamples\left(A\log A+B\right ) + AB \log^3 A}$ when used with the hierarchical version of \palm. The time complexities of the main steps are given in Algorithm~\ref{algo:qmeans}. 

The assignation step (line~\ref{line:qmeans:assignment} and Eq.~\eqref{eq:assignment_problem_kmeans}) benefits from the fast computation of $\rmV \rmX$ in~$\bigO{\nexamples\left(A\log A+B\right )}$ while the computation of the norms of the cluster centers is in $\bigO{AB}$.
One can see that the computational bottleneck of \kmeans is here reduced, which shows the advantage of using \qkmeans when $\nexamples$, $\nclusters$ and $\datadim$ are large.

The computation of the centers of each cluster, given in line~\ref{line:qmeans:compute_means}, is the same as in \kmeans and takes $\bigO{\nexamples\datadim}$ operations.

The update of the fast transform, in lines~\ref{line:qmeans:A} to~\ref{line:qmeans:U} is a computational overload compared to \kmeans. 
Its time complexity is dominated by the update of the sparse factors at line~\ref{line:qmeans:S}, in $\bigO{AB \log^2 A}$ if \palm is called and in $\bigO{AB \log^3 A}$ if its hierarchical version is called. 
Note that this cost is dominated by the cost of the assignement step as soon as the number of examples $\nexamples$ is greater than $\log^3 A$.

\section{Experiments and applications}
\label{sec:uses}

\subsection{Experimental setting}
\label{sec:uses:settings}

\paragraph{Implementation details.}
The simulations have been conducted in Python, including for the \palm algorithm.
Running times are measured on computer grid with 3.8GHz-CPUs (2.5GHz in Figure~\ref{fig:time_csr}).
Fast operators $\rmV$ based on sparse matrices $\rmS_q$ are implemented with \texttt{csr\_matrix} objects from the \texttt{scipy.linalg} package. 
While more efficient implementations may be beneficial for larger deployment, our implementation is sufficient as a proof of concept for assessing the performance of the proposed approach. 
In particular, the running times of fast operators of the form $\prod_{q\in\intint{\nfactors}}{\rmS_q}$ have been measured when applying to random vectors, for several sparsity levels: 
as shown in Figure~\ref{fig:time_csr}, they are significantly faster than dense operators -- implemented as a \texttt{numpy.ndarray} matrix --, especially when the data size is larger than $10^3$.

\begin{figure}[tbh]
\centering
\includegraphics[width=.8\textwidth]{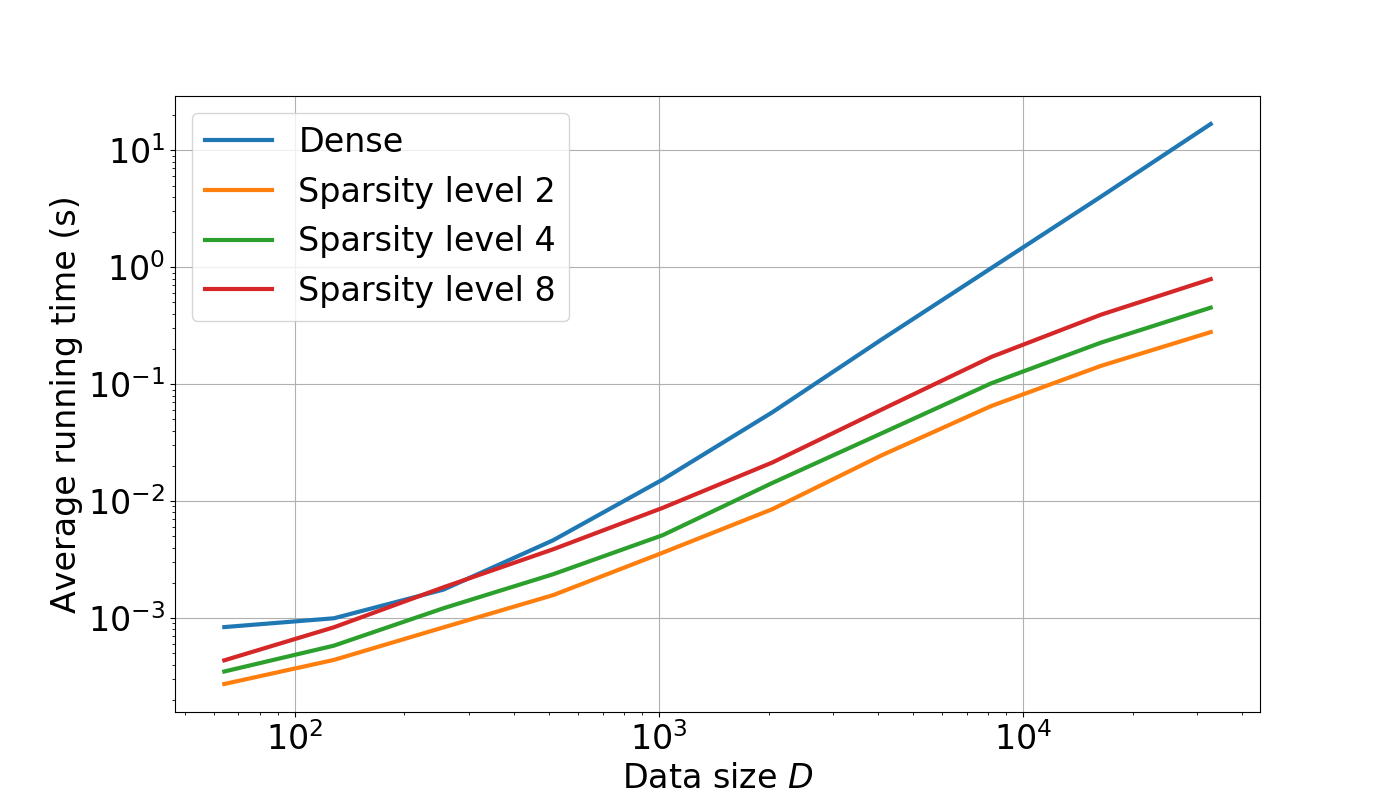}
\caption{Running times, averaged over 30 runs, when applying dense or fast $\datadim \times \datadim$ operators to a set of 100 random vectors. The number of factors in fast operators equals $\log_2\left (\datadim\right )$ and the sparsity level denotes the number of non-zero coefficients per row and per column in each factor.}
\label{fig:time_csr}
\end{figure}

\paragraph{Datasets.}
We present results on real-world and toy datasets summarized in Table \ref{table:data}. On the one hand, the real world datasets \texttt{MNIST}~\cite{lecun-mnisthandwrittendigit-2010} and \texttt{Fashion-Mnist}~\cite{Pedregosa2011Scikit} 
are used to show --- quantitatively and qualitatively --- the good quality of the obtained centroids when using our method \qkmeans. On the other hand, we use the \texttt{blobs} synthetic dataset from \texttt{sklearn.dataset} to show the speed up offered by our method \qkmeans when the number of clusters and the dimensionality of the data are sufficiently large.

\begin{table*}[!h]
\centering
\begin{tabular}{|c|c|c|c|c|c|}
\hline
\textbf{Dataset} & \textbf{Data dim.} $\datadim$        & \textbf{\# classes} & \textbf{Training set size} $\nexamples$ & \textbf{Test set size} $\nexamples'$ \\ \hline
MNIST                   & 784   & 10        & 60 000    & 10 000               \\ \hline
Fashion-MNIST           & 784   & 10        & 60 000    & 10 000               \\ \hline
Blobs (clusters std: 12)   & 2000  & 1000      & 29000      & 1000               \\ \hline
\end{tabular}
\caption{Datasets statistics}
\label{table:data}
\end{table*}

\paragraph{Algorithm settings.} 
The \qkmeans algorithm is used with $Q\eqdef\log_2\left (A\right )$ sparse factors, where  $A=\min\left (\nclusters, \datadim\right )$. 
All factors $\rmS_q$ are with shape $A \times A$ except, depending on the shape of $\rmA$, the leftmost one ($\nclusters\times A$) or the rightmost one ($A\times\datadim$). 
The sparsity constraint of each factor $\rmS_q$ is set in $\mathcal{E}_q$ and is governed by a global parameter denoted as \textit{sparsity level}, which indicates the desired number of non-zero coefficients in each row and in each column of $\rmS_q$. 
Since the projection onto this set of structured-sparsity constraints may be computationally expensive, this projection is relaxed in the implementation of \palm and only guarantees that the number of non-zero coefficients in each row and each column is at least the sparsity level, as in~\cite{LeMagoarou2016Flexible}.
The actual number of non-zero coefficients in the sparse factors is measured at the end of the optimization process and reported in the results.
The sparse factors are updated using the \palm rather than its hierarchical version, since we observed that this was a better choice in terms of computational cost, with satisfying approximation results (See Figure~\ref{fig:mnist:objfun}~and~\ref{fig:fmnist:objfun}).
Additional details about \palm are given in Appendix~\ref{sec:app:palm4msa}.
The stopping criterion of \kmeans and \qkmeans consists of a tolerance set to $10^{-6}$ on the relative variation of the objective function and a maximum number of iterations set to 10 for the \texttt{Blobs}dataset and to 20 for others. The same principle governs the stopping criterion of \palm with a tolerance set to $10^{-6}$ and a maximum number of iterations set to 300. Each experiment have been replicated using different seed values for random initialisation. Competing techniques share the same seed values, hence share the same initialisation of centroids.

\subsection{Clustering}

\begin{figure}
\begin{subfigure}[b]{.49\textwidth}
\includegraphics[width=\textwidth]{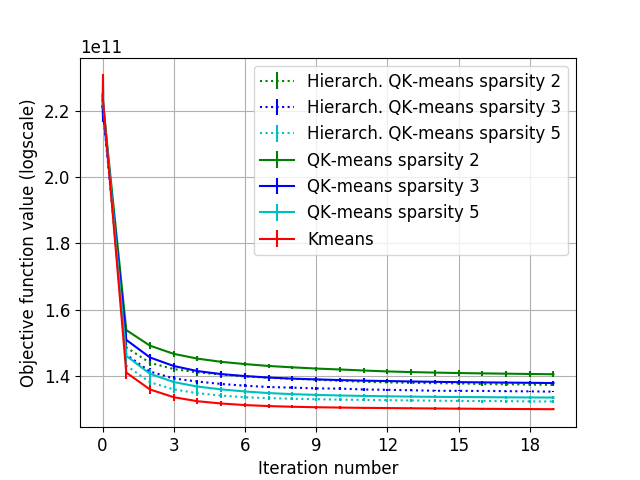}
\caption{MNIST, $\nclusters=30$: objective function.}
\label{fig:mnist:objfun}
\end{subfigure}
\begin{subfigure}[b]{.49\textwidth}
\includegraphics[width=\textwidth]{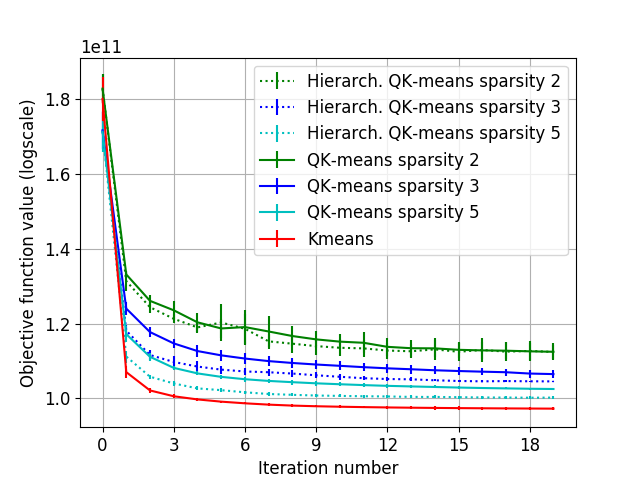}
\caption{Fashion-MNIST, $\nclusters=30$: objective function.}
\label{fig:fmnist:objfun}
\end{subfigure}
\begin{subfigure}[t]{.49\textwidth}
\includegraphics[width=\textwidth]{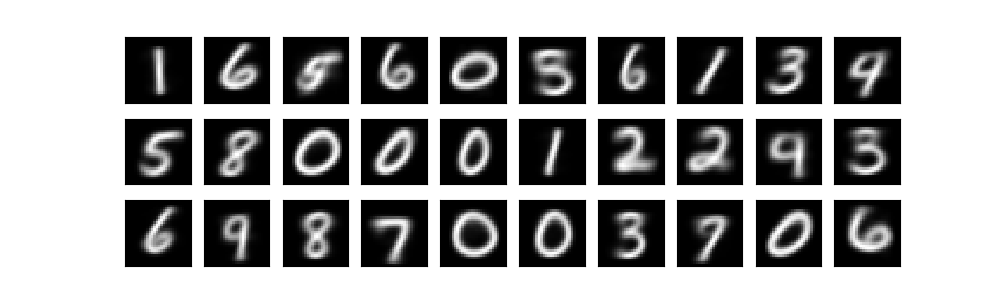}
\caption{\kmeans centroids.}
\label{fig:mnist:kmeans:centroids}
\end{subfigure}
\begin{subfigure}[t]{.49\textwidth}
\includegraphics[width=\textwidth]{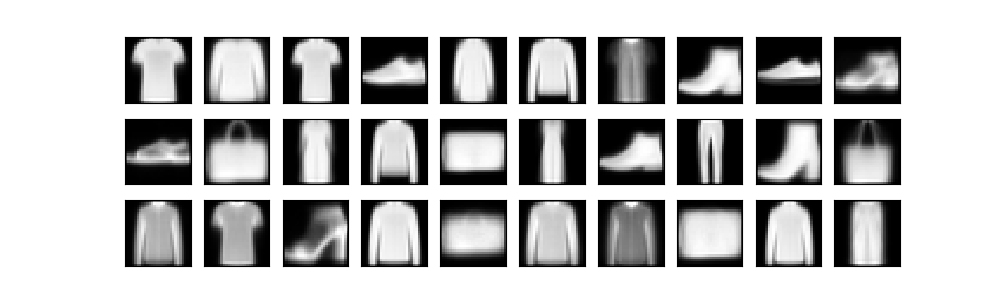}
\caption{\kmeans centroids.}
\label{fig:fmnist:kmeans:centroids}
\end{subfigure}
\begin{subfigure}[t]{.49\textwidth}
\includegraphics[width=\textwidth]{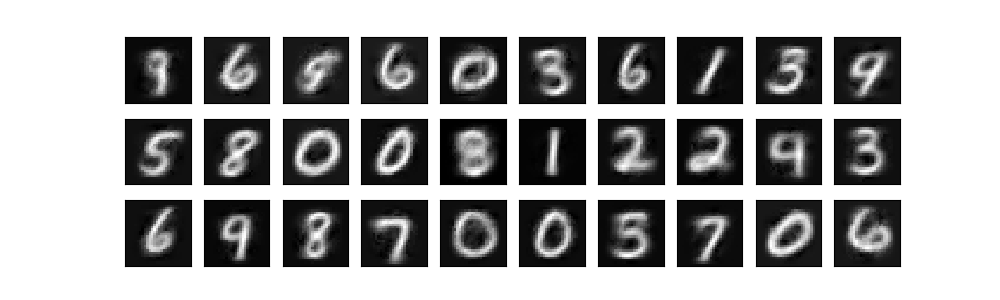}
\caption{\qkmeans centroids.}
\label{fig:mnist:qkmeans:centroids}
\end{subfigure}
\begin{subfigure}[t]{.49\textwidth}
\includegraphics[width=\textwidth]{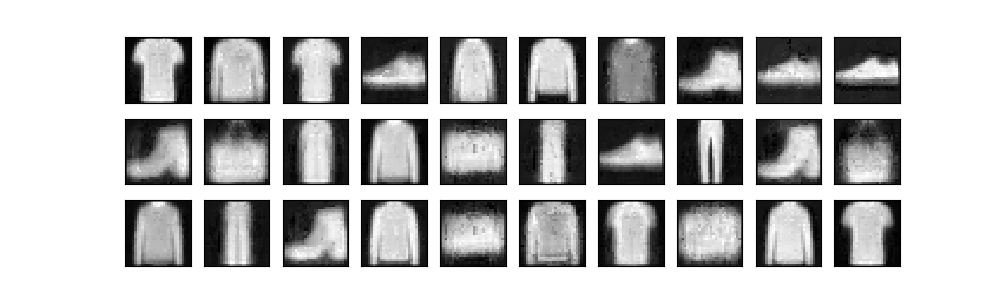}
\caption{\qkmeans centroids.}
\label{fig:fmnist:qkmeans:centroids}
\end{subfigure}
\begin{subfigure}[t]{.49\textwidth}
\includegraphics[width=\textwidth]{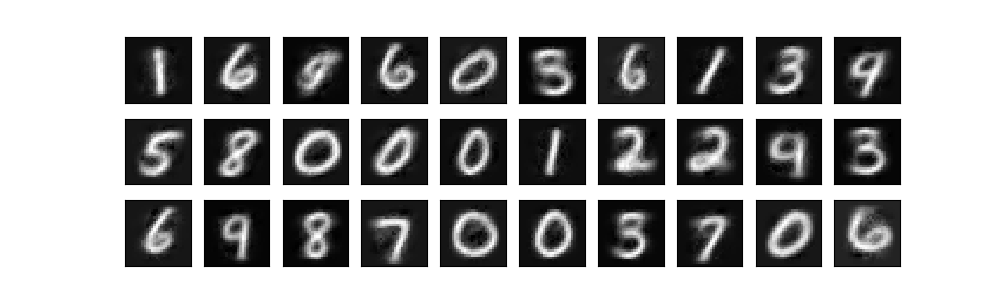}
\caption{Hierarchical-\palm \qkmeans centroids.}
\label{fig:mnist:hqkmeans:centroids}
\end{subfigure}
\begin{subfigure}[t]{.49\textwidth}
\includegraphics[width=\textwidth]{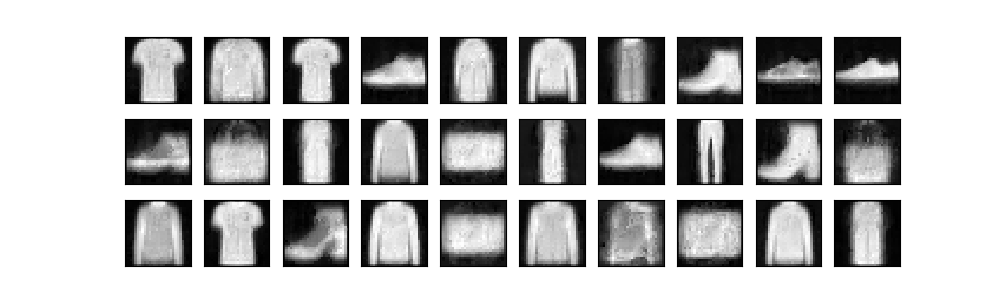}
\caption{Hierarchical-\palm \qkmeans centroids.}
\label{fig:fmnist:hqkmeans:centroids}
\end{subfigure}
\caption{Clustering results on MNIST (left) and Fashion-MNIST (right) for $\nclusters=30$ clusters.}
\label{fig:clustering:realdata}
\end{figure}

\paragraph{Approximation quality.} One important question is the ability of the fast-structure model to fit arbitrary data.
Indeed, no theoretical result about the expressivity of such models is currently available.
In order to assess this approximation quality, the MNIST and Fashion-MNIST data have been clustered into $\nclusters=30$ clusters by \kmeans, \qkmeans and a variant of \qkmeans using the hierarchical version of \palm, with several sparsity levels.
Results are reported in Figure~\ref{fig:clustering:realdata}.
In Figures~\ref{fig:mnist:objfun} and~\ref{fig:fmnist:objfun}, one can observe that the objective function of \qkmeans is decreasing in a similar way as \kmeans over iterations.
In particular, the use of the fast-structure model does not seem to increase the number of iteration necessary before convergence.
At the end of the iterations, the value of objective function for \qkmeans is slightly above that of \kmeans.
As expected, the sparser the model, the more degradation in the objective function.
However, even very sparse models do not degrade the results significantly. These Figures also demonstrate the convergence property of the \qkmeans algorithm when using the standard, proved convergent, \textit{Palm4MSA} algorithm: in this case, the objective function is always non-increasing whereas the \qkmeans version with \textit{Hiearchical Palm4MSA}, not guaranteed to converge, suffers a small bump in its objective function (see Figure~\ref{fig:fmnist:objfun} iteration~6).
The approximation quality can be assessed visually, in a more subjective and interpretable way, in Figures~\ref{fig:mnist:kmeans:centroids} to~\ref{fig:fmnist:hqkmeans:centroids} where the obtained centroids are displayed as images.
Although some degradation may be observed in some images, one can note that each image obtained with \qkmeans clearly represents a single visual item without noticeable interference with other items.

\paragraph{Clustering assignation times.}
Higher dimensions are required to assess the computational benefits of the proposed approach, as shown here.
The assignation times of the clustering procedure were measured on the \texttt{Blobs} dataset.
The centroid matrices are with shape $\nclusters \times \datadim$ with $\datadim=2000$  and $\nclusters\in\left \lbrace 128, 256, 512\right \rbrace$.
Results reported in Figure~\ref{fig:clustering:blobs:assignation_time} show that in this setting and with the current implementation, the computational advantage of \qkmeans is observed in high dimension, for $\nclusters=256$ and $\nclusters=512$ clusters. It is worth noticing that when $K$ increases, the running times are not affected that much for \qkmeans while it significantly grows for \kmeans. These trends are directly related to the number of model parameters that are reported in the figure.

\begin{figure}[tbh]
\centering
\includegraphics[width=.8\textwidth]{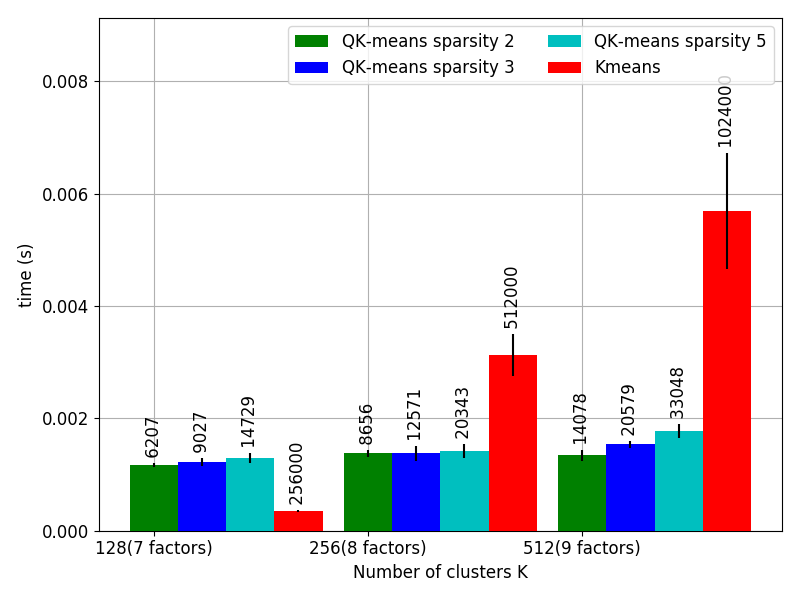}
\caption{Clustering Blobs data: running times of the assignation step, averaged over 5 runs. The vertical black lines are the standard deviation w.r.t. the runs and the average number of parameters actually learned  in the models are reported above those lines.\addVE{to be completed}.}
\label{fig:clustering:blobs:assignation_time}
\end{figure}

\subsection{Nearest-neighbor search in a large dataset}
The Nearest-neighbor search is a fundamental task that suffers from computational limitations when the dataset is large.
Fast strategies have been proposed, e.g., using kd trees or ball trees.
One may also use a clustering strategy to perform an approximate nearest-neighbor search: the query is first compared to $\nclusters$ centroids computed beforehand by clustering the whole dataset, and the nearest neighbor search is then performed among a lower number of data points, within the related cluster.
We compare this strategy using \kmeans and \qkmeans against the \texttt{scikit-learn} implementation~\cite{Pedregosa2011Scikit} of the nearest-neighbor search (brute force search, kd tree, ball tree).
Inference time results on the \texttt{Blobs} dataset are reported in Figure~\ref{fig:nn:blobs} and accuracy results are displayed in Table~\ref{table:results_blobs}. 
The running times reported in Figure~\ref{fig:nn:blobs} show a dramatic advantage of using a clustering-based approximate search 
and this advantage is even stronger with the clustering obtained by our \qkmeans method. This speed-up comes at a cost though, we can see a drop in classification performance in Table~\ref{table:results_blobs}. 

\begin{figure}[tbh]
\centering
\includegraphics[width=\textwidth]{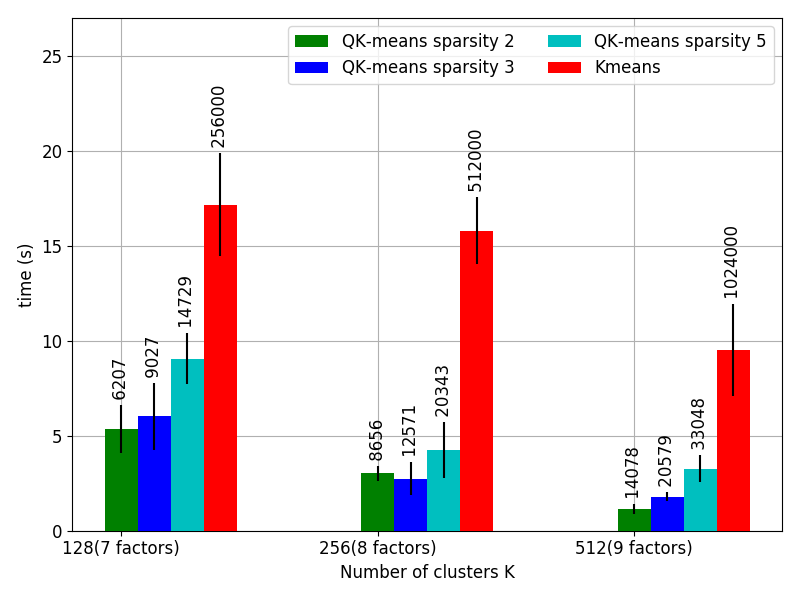}
\label{fig:nn:blobs:times}

\caption{Running time of nearest neighbor search on blobs data. Results are averaged over 5 runs (vertical lines: standard deviation) and the average number of parameters actually learned is reported above each bar. The results for the Brute Force Search, KD Tree and Ball Tree are not displayed because they were longer than 10 times the K-means search version.}
\label{fig:nn:blobs}
\end{figure}

\begin{table}[]

\centering
\begin{tabular}{@{}l|P{2.5cm}}
\toprule
                                    & Accuracy \texttt{Blobs} \\ \midrule
1NN Brute force search              & N/A   \\
1NN KD Tree                         & N/A   \\
1NN Ball Tree                       & N/A   \\ \midrule \midrule
1NN K-means 128 Clusters            & 0.96      \\
1NN K-means 256 Clusters            & 0.97      \\
1NN K-means 512 Clusters            & 0.99      \\ \midrule
1NN QK-means 128 Clusters           & 0.74     \\
1NN QK-means 256 Clusters           & 0.66      \\
1NN QK-means 512 Clusters           & 0.66      \\ \midrule \midrule
Nyström K-means + SVM 128 Clusters  & 0.98      \\
Nyström K-means + SVM 256 Clusters  & 1.0      \\
Nyström K-means + SVM 512 Clusters  & 1.0      \\ \midrule
Nyström QK-means + SVM 128 Clusters & 0.95      \\
Nyström QK-means + SVM 256 Clusters & 1.0      \\
Nyström QK-means + SVM 512 Clusters & 1.0      \\ \bottomrule
\end{tabular}

\caption{Results on the classification task on \texttt{Blobs} dataset. Results are averaged over 5 runs. ``N/A'' denotes experiments that did not finish. Only results with sparsity value 5 are displayed for \qkmeans experiments. For the \qkmeans results, only those obtained with sparsity level = 5 are displayed.}
\label{table:results_blobs}

\end{table}

\begin{table}[]

\centering
\begin{tabular}{@{}l|P{2.5cm}|P{2.5cm}}
\toprule
                                    & Accuracy \texttt{Fashion-MNIST} & Accuracy \texttt{MNIST} \\ \midrule
1NN Brute force search              &  0.85   &  0.97 \\
1NN KD Tree                         & 0.85  &  0.97  \\
1NN Ball Tree                       & 0.85  &  0.97  \\ \midrule \midrule
1NN K-means 10 Clusters            & 0.84  &  0.96  \\
1NN K-means 16 Clusters            & 0.84  &  0.96  \\
1NN K-means 30 Clusters            & 0.84  &  0.96  \\ \midrule
1NN QK-means 10 Clusters           & 0.84  &  0.96  \\
1NN QK-means 16 Clusters           & 0.84  &  0.96  \\
1NN QK-means 30 Clusters           & 0.84  &  0.96  \\ \midrule \midrule
Nyström K-means + SVM 10 Clusters  &  0.71  & 0.74  \\
Nyström K-means + SVM 16 Clusters  &  0.75  & 0.83  \\
Nyström K-means + SVM 30 Clusters  &  0.78  & 0.88  \\ \midrule
Nyström QK-means + SVM 10 Clusters &  0.71  & 0.74  \\
Nyström QK-means + SVM 16 Clusters &  0.74  & 0.82  \\
Nyström QK-means + SVM 30 Clusters &  0.77  & 0.88  \\ \bottomrule
\end{tabular}

\caption{Results on the classification task on the \texttt{MNIST} and \texttt{Fashion-MNIST} datasets. Results are averaged over 5 runs. ``N/A'' denotes experiments that did not finish. For the \qkmeans results, only those obtained with sparsity level = 5 are displayed.}
\label{table:results_mnist_fmnist}

\end{table}

\subsection{Nyström approximation}

In this sub-section, we show how we can take advantage of the fast-operator obtained as output of our \qkmeans algorithm in order to speed-up the computation in the Nyström approximation. 
We start by giving background knowledge on the Nyström approximation then we present some recent work aiming at accelerating it using well know fast-transform method. 
We finally stem on this work to present a novel approach based on our \qkmeans algorithm.

\subsubsection{Background on the Nyström approximation}

Standard kernel machines are often impossible to use in large-scale applications because of their high computational cost associated with the kernel matrix $\rmK$ which has $O(n^2)$ storage and $O(n^2d)$ computational complexity: $\forall i,j \in\intint{\nexamples}, \rmK_{i,j} = k(\rvx_i, \rvx_j)$. A well-known strategy to overcome this problem is to use the Nyström method which computes a low-rank approximation of the kernel matrix on the basis of some pre-selected landmark points. 

Given $K \ll n$ landmark points $\{\rmU_i\}_{i=1}^{K}$, the Nyström method gives the following approximation of the full kernel matrix:
\begin{equation}
 \label{eq:nystrom}
 \rmK \approx \tilde\rmK = \rmC\rmW^\dagger\rmC^T,
\end{equation}
with $\rmW \in \R^{K \times K}$ containing all the kernel values between landmarks: $\forall i,j \in [\![K]\!]~ \rmW_{i,j} = k(\rmU_i, \rmU_j)$; $\rmW^\dagger$ being the pseudo-inverse of $\rmW$ and $\rmC \in \R^{n \times K}$ containing the kernel values between landmark points and all data points: $\forall i \in [\![n]\!], \forall j \in [\![K]\!]~ \rmC_{i, j} = k(\rmX_i, \rmU_j)$.

\subsubsection{Efficient Nyström approximation}

A substantial amount of research has been conducted toward landmark point selection methods for improved approximation accuracy \cite{kumar2012sampling} \cite{musco2017recursive}, but much less has been done to improve computation speed. In \cite{si2016computationally}, the authors propose an algorithm to learn the matrix of landmark points with some structure constraint, so that its utilisation is fast, taking advantage of fast-transforms. This results in an efficient Nyström approximation that is faster to use both in the training and testing phases of some ulterior machine learning application.

Remarking that the main computation cost of the Nyström approximation comes from the computation of the kernel function between the train/test samples and the landmark points, \cite{si2016computationally} aim at accelerating this step. In particular, they focus on a family of kernel functions that has the following form:
\begin{equation}
 k(\rvx_i, \rvx_j) = f(\rvx_i) f(\rvx_j) g(\rvx_i^T\rvx_j),
\end{equation}
where $f: \R^d \rightarrow \R$ and $g: \R \rightarrow \R$. They show that this family of functions contains some widely used kernels such as the Gaussian and the polynomial kernel. Given a set of $K$ landmark points $\rmU \in \R^{K \times d}$ and a sample $\rvx$, the computational time for computing the kernel between $\rvx$ and each row of $\rmU$ (necessary for the Nyström approximation) is bottlenecked by the computation of the product $\rmU\rvx$. They hence propose to write the $\rmU$ matrix as the concatenation of structured $s = K / d$ product of matrices:
\begin{equation}
 \rmU = \left[ \rmV_1 \rmH^T, \cdots, \rmV_s\rmH^T  \right]^T,
\end{equation}
where the $\rmH$ is a $d \times d$ matrix associated with a fast transform such as the \textit{Haar} or \textit{Hadamard} matrix, and the $\rmV_i$s are some $d \times d$ diagonal matrices to be either chosen with a standard landmark selection method or learned using an algorithm they provide.

Depending on the $\rmH$ matrix chosen, it is possible to improve the time complexity for the computation of $\rmU\rvx$ from $O(Kd)$ to $O(K \log{d})$ (\textit{Fast Hadamard transform}) or $O(K)$ (\textit{Fast Haar Transform}).

\subsubsection{\qkmeans in Nyström}

We propose to use our \qkmeans algorithm in order to learn directly the $\rmU$ matrix in the Nyström approximation so that the matrix-vector multiplication $\rmU \rvx$ is cheap to compute, but the structure of $\rmU$ is not constrained by some pre-defined transform matrix. We propose to take the objective $\rmU$ matrix as the \kmeans matrix of $\rmX$ since it has been shown to achieve good reconstruction accuracy in the Nyström method \cite{kumar2012sampling}.

As shown in the next sub-section, our algorithm allow to obtain an efficient Nyström approximation, while not reducing too much the quality of the \kmeans landmark points which are encoded as a factorization of sparse matrix. 

\subsubsection{Results}

The Figure~\ref{fig:nystrom} summarizes the results achieved in the Nyström approximation setting. 

The Figures on the right display the average time for computing one line of the approximated matrix in Equation~\ref{eq:nystrom}. In Figure~\ref{fig:blobs:nystrom_time}, we clearly see the speed-up offered using the \qkmeans method on the \texttt{Blobs} dataset. On the \texttt{Mnist} and \texttt{Fashion-MNIST} dataset (Figure~\ref{fig:mnist:nystrom_time}~and~\ref{fig:fashmnist:nystrom_time}), this speed-up is sensible but not as clear because the standard deviation is much larger. 

The Figures on the left show the approximation error of the Nyström approximation based on different sampling schemes w.r.t. the real kernel matrix. This error is computed by the Froebenius norm of the difference between the matrices and then normalized:

\begin{equation}
 error = \frac{||\rmK - \tilde\rmK||_F}{||\rmK||_F}
\end{equation}

. The \qkmeans approach gives better reconstruction error than the Nyström method based on uniform sampling although they are slightly worse than the one obtained with the \kmeans centroids. We see that that the difference in approximation error between \kmeans and \qkmeans is almost negligeable when compared to the approximation error obtained with the uniform sampling scheme.

From a more practical point of view, we show in Table~\ref{table:results_blobs} and Table~\ref{table:results_mnist_fmnist} that the Nyström approximation based on \qkmeans can then be used in a linear SVM and achieve as good performance as the one based on the \kmeans approach.

\begin{figure}
\begin{subfigure}[b]{.49\textwidth}
\includegraphics[width=\textwidth]{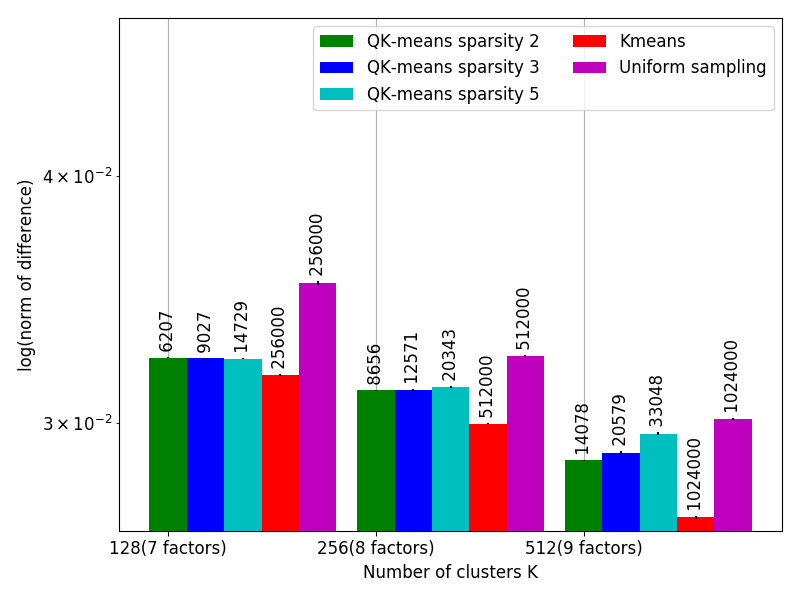}
\caption{\texttt{Blobs}: Nyström reconstruction error.}
\label{fig:blobs:nystrom_error}
\end{subfigure}
\begin{subfigure}[b]{.49\textwidth}
\includegraphics[width=\textwidth]{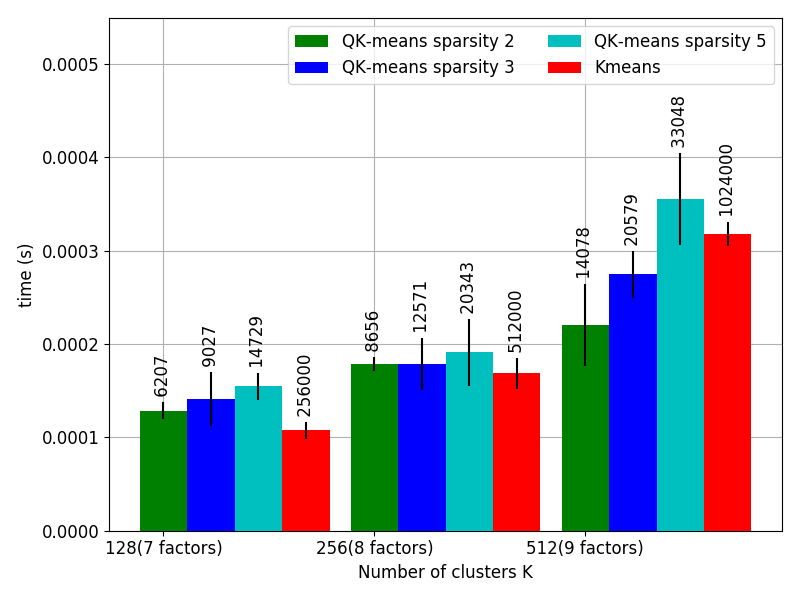}
\caption{\texttt{Blobs}: Nyström inference time.}
\label{fig:blobs:nystrom_time}
\end{subfigure}
\begin{subfigure}[b]{.49\textwidth}
\includegraphics[width=\textwidth]{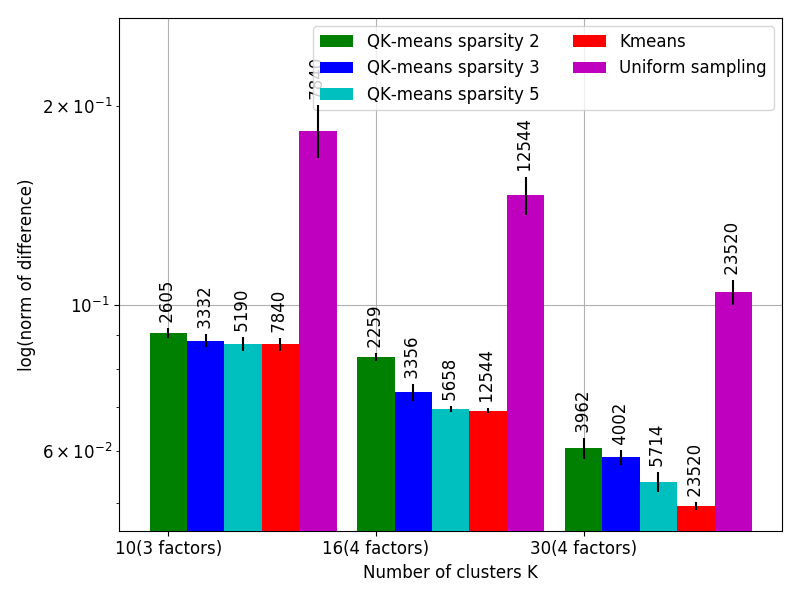}
\caption{\texttt{MNIST}: Nyström reconstruction error.}
\label{fig:mnist:nystrom_error}
\end{subfigure}
\begin{subfigure}[b]{.49\textwidth}
\includegraphics[width=\textwidth]{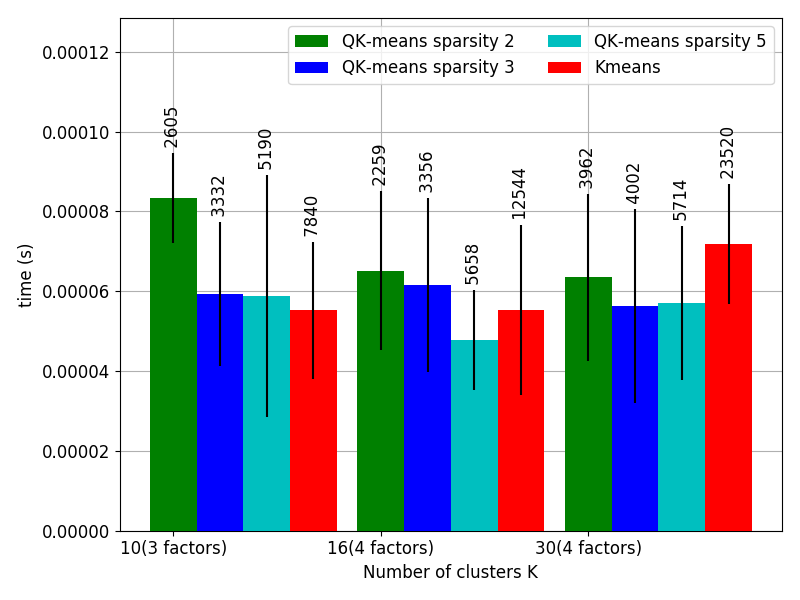}
\caption{\texttt{MNIST}: Nyström inference time.}
\label{fig:mnist:nystrom_time}
\end{subfigure}
\begin{subfigure}[b]{.49\textwidth}
\includegraphics[width=\textwidth]{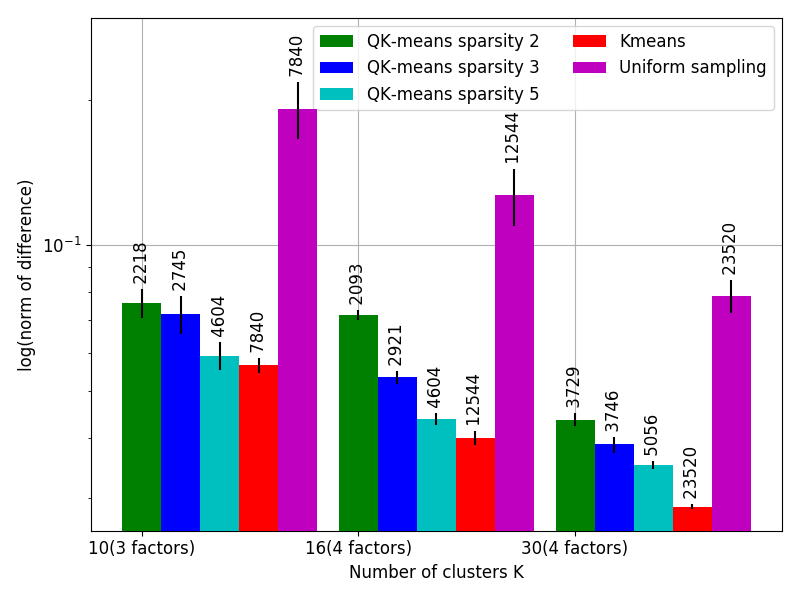}
\caption{\texttt{Fashion-MNIST}: Nyström reconstruction error.}
\label{fig:fashmnist:nystrom_error}
\end{subfigure}
\begin{subfigure}[b]{.49\textwidth}
\includegraphics[width=\textwidth]{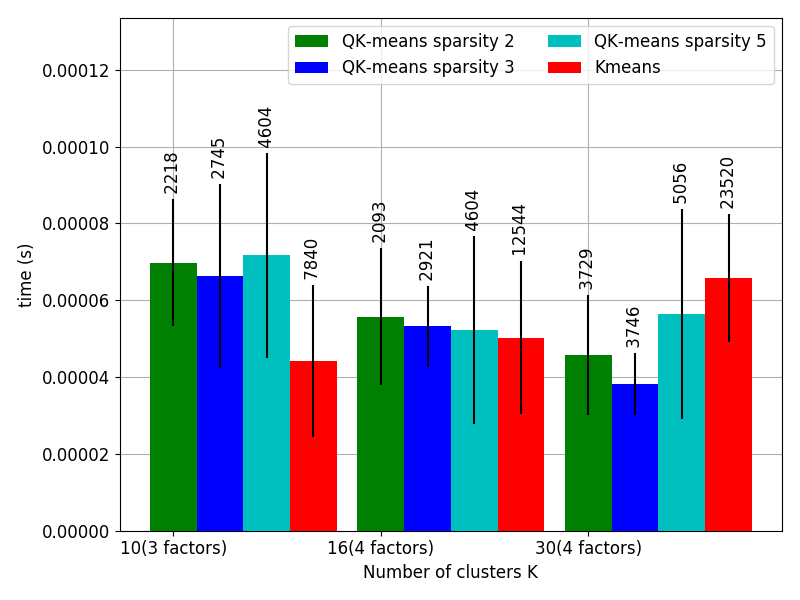}
\caption{\texttt{Fashion-MNIST}: Nyström inference time.}
\label{fig:fashmnist:nystrom_time}
\end{subfigure}
\caption{Nystr\"om approximation results: accuracy (left) and running times (right). The uniform sampling based Nyström approximation running times are not displayed because they are the same as for the Nyström approximation based on \kmeans centroids. Every experiment results are averaged over 5 runs. The vertical black lines are the standard deviation w.r.t. the runs.}
\label{fig:nystrom}
\end{figure}

\section{Conclusion}
\label{sec:conclusion}

In this paper, we have proposed a variant of the \kmeans algorithm, named \qkmeans, designed to achieve a similar goal -- clustering data points around $\nclusters$ learned centroids -- with a much lower computational complexity as the dimension of the data, the number of examples and the number of clusters get high. Our approach is based on the approximation of the centroid matrix by an operator structured as a product of a small number of sparse matrices, resulting in a low time and space complexity when applied to data vectors.
We have shown the convergence properties of the proposed algorithm and provided its complexity analysis.

An implementation prototype has been run in several core machine learning use cases including clustering, nearest-neighbor search and Nystr\"om approximation. The experimental results illustrate the computational gain in high dimension at inference time as well as the good approximation qualities of the proposed model.

Beyond these modeling, algorithmic and experimental contributions to low-complexity high-dimensional machine learning, we have identified several important questions that are still to be addressed.
First, although learning the fast-structure operator has been nicely integrated in the training algorithm with an advantageous theoretical time and space complexity, exhibiting gains in actual running times has not been achieved yet for the \qkmeans learning procedure, compared to \kmeans.
This may be obtained in even higher dimensions than in the proposed experimental settings, which may require a new version of \qkmeans using batches of data in order to process amounts of data that do not fit in memory.
Second, the expressiveness of the fast-structure model is still to be theoretically studied, while our experiments seems to show that arbitrary matrices may be well fitted by such models.
Third, we believe that learning fast-structure linear operators during the training procedure may be generalized to many core machine learning methods in order to speed them up and make them scale to larger dimensions.

%====================================================================================

\bibliographystyle{plain}
\bibliography{qmeans}

\appendix
\section{\palm algorithm}
\label{sec:app:palm4msa}

The \palm algorithm~\cite{LeMagoarou2016Flexible} is given in Algorithm~\ref{algo:palm4msa} together with the time complexity of each line, using $A=\min(\nclusters, \datadim)$ and $B=\max(\nclusters, \datadim)$. 
Even more general constraints can be used, the constraint sets $\mathcal{E}_q$ are typically defined as the intersection of the set of unit Frobenius-norm matrices and of a set of sparse matrices.
The unit Frobenius norm is used together with the $\lambda$ factor to avoid a scaling indeterminacy.
Note that to simplify the model presentation, factor $\lambda$ is used internally in \palm and is integrated in factor $\rmS_1$ at the end of the algorithm (Line~\ref{line:palm:postprocess:S1}) so that $\rmS_1$ does not satisfy the unit Frobenius norm in $\mathcal{E}_1$ at the end of the algorithm.
The sparse constraints we used, as in~\cite{LeMagoarou2016Flexible}, consist of trying to have a given number of non-zero coefficients in each row and in each column.
This number of non-zero coefficients is called sparsity level in this paper.
In practice, the projection function at Line~\ref{line:palm:update:S} keeps the largest non-zero coefficients in each row and in each column, which only guarantees
the actual number of non-zero coefficients is at least equal to the sparsity level.

\begin{algorithm}
	\caption{\palm algorithm}
	\label{algo:palm4msa}
	\begin{algorithmic}[1]
		
		\REQUIRE The matrix to factorize $\rmU \in \R^{\nclusters \times \datadim}$, the desired number of factors $\nfactors$, the constraint sets $\mathcal{E}_q$ , $q\in \intint{\nfactors}$ and a stopping criterion (e.g., here, a number of iterations $I$ ).
		
		\STATE $\lambda \leftarrow \norm{S_1}_F$
		\COMMENT{$\bigO{B}$}
		\label{line:palm:init:lambda}
		\STATE $S_1 \leftarrow \frac{1}{\lambda} S_1$
		\COMMENT{$\bigO{B}$}
		\label{line:palm:normalize:S1}
		\FOR {$i \in\intint{I}$ while the stopping criterion is not met}
		\FOR {$q = \nfactors$ down to $1$}
		\STATE  $\rmL_q \leftarrow \prod_{l=1}^{q-1} \rmS_{l}^{(i)}$
		\label{line:palm:L}
		\STATE  $\rmR_q \leftarrow \prod_{l=q+1}^{\nfactors} \rmS_{l}^{(i+1)}$
		\label{line:palm:R}
		\STATE Choose $c > \lambda^2 ||\rmR_q||_2^2 ||\rmL_q||_2^2$
		\COMMENT{$\bigO{A \log A+B}$}
		\label{line:palm:c}
		\STATE $\rmD \leftarrow \rmS_q^i - \frac{1}{c} \lambda \rmL_q^T\left (\lambda\rmL_q \rmS_q^i \rmR_q - \rmU\right )\rmR_q^T$
		\COMMENT{$\bigO{AB\log A}$}
		\label{line:palm:D}
		\STATE $\rmS^{(i+1)}_q \leftarrow P_{\mathcal{E}_q}(\rmD)$
		\COMMENT{$\bigO{A^2\log A}$ or $\bigO{AB\log B}$}
		\label{line:palm:update:S}
		\ENDFOR
		\STATE $\hat \rmU \eqdef \prod_{j=1}^{\nfactors} \rmS_q^{(i+1)}$
		\COMMENT{$\bigO{A^2\log A + AB}$}
		\label{line:palm:U}
		\STATE $\lambda \leftarrow \frac{Trace(\rmU^T\hat\rmU)}{Trace(\hat\rmU^T\hat\rmU)}$
		\COMMENT{$\bigO{AB}$}
		\label{line:palm:update:lambda}
		\ENDFOR
		\STATE $S_1 \leftarrow \lambda S_1$
		\COMMENT{$\bigO{B}$}
		\label{line:palm:postprocess:S1}
		\ENSURE $\left \lbrace \rmS_q : \rmS_q \in \mathcal{E}_q\right \rbrace_{q\in\intint{\nfactors}}$ such that $\prod_{q\in\intint{\nfactors}}\rmS_q \approx \rmU$
		
	\end{algorithmic}
\end{algorithm}

The complexity analysis is proposed under the following assumptions, which are satisfied in the mentioned applications and experiments: the number of factors is $\nfactors=\mathcal{O}\left (\log A\right )$; all but one sparse factors are of shape $A \times A$ and have $\bigO{A}$ non-zero entries while one of them is of shape $A\times B$ or $B\times A$ with $\bigO{B}$ non-zero entries.
In such conditions, the complexity of each line is:
\begin{itemize}
 \item [Lines~\ref{line:palm:init:lambda}-\ref{line:palm:normalize:S1}] Computing these normalization steps is linear in the number of non-zeros coefficients in $\rmS_1$.
 \item [Lines~\ref{line:palm:L}-\ref{line:palm:R}] Fast operators $\rmL$ and $\rmR$ are defined for subsequent use without computing explicitly the product.
 \item [Line~\ref{line:palm:c}] The spectral norm of $\rmL$ and $\rmR$ is obtained via a power method by iteratively applying each operator, benefiting from the fast transform.
 \item [Line~\ref{line:palm:D}] The cost of the gradient step is dominated by the product of sparse matrices.
\item [Line~\ref{line:palm:update:S}] The projection onto a sparse-constraint set takes $\bigO{A^2\log A}$ for all the $A\times A$ matrices and $\bigO{AB\log B}$ for the rectangular matrix at the leftmost or the rightmost position.
 \item [Line~\ref{line:palm:U}] The reconstructed matrix $\hat \rmU$ is computed using $\bigO{\log A}$ products between $A\times A$ sparse matrices, in $\bigO{A^2}$ operations each, and one product with a sparse matrix in $\bigO{AB}$.
 \item [Line~\ref{line:palm:update:lambda}] The numerator and denominator can be computed using a Hadamard product between the matrices followed by a sum over all the entries.
  \item [Line~\ref{line:palm:postprocess:S1}] Computing renormalization step is linear in the number of non-zeros coefficients in $\rmS_1$.
\end{itemize}

Hence, the overal time complexity of \palm is in $\bigO{AB\left (\log^2 A+\log B\right )}$, due to Lines~\ref{line:palm:D} and~\ref{line:palm:update:S}.

\end{document}